\newtheorem{theorem}{Theorem}
\newtheorem{corollary}{Corollary}
\titlespacing\section{0pt}{12pt plus 3pt minus 3pt}{1pt plus 1pt minus 1pt}
\titlespacing\subsection{0pt}{10pt plus 3pt minus 3pt}{1pt plus 1pt minus 1pt}
\titlespacing\subsubsection{0pt}{8pt plus 3pt minus 3pt}{1pt plus 1pt minus 1pt}
\definecolor{lime}{HTML}{A6CE39}
\DeclareRobustCommand{\orcidicon}{
	\begin{tikzpicture}
	\draw[lime, fill=lime] (0,0)
	circle [radius=0.16]
	node[white] {{\fontfamily{qag}\selectfont \tiny ID}};
	\draw[white, fill=white] (-0.0625,0.095)
	circle [radius=0.007];
	\end{tikzpicture}
	\hspace{-2mm}
}
\title{When Does Pairing Seeds Reduce Variance? Evidence from a Multi-Agent Economic Simulation}
\author[]{Udit Sharma\orcidA{}}
\begin{document}

\twocolumn[
  \begin{@twocolumnfalse}

\maketitle

\begin{abstract}
	
	Machine learning systems appear stochastic but are deterministically random, as seeded pseudorandom number generators produce identical realisations across repeated executions. Learning-based multi-agent simulations are increasingly used to compare algorithms, design choices, and policy interventions under such stochastic dynamics, but comparative evaluation frequently exhibits high variance due to random initial conditions, environment noise, and learning stochasticity. Standard evaluation practice typically treats runs across alternatives as independent and does not exploit shared sources of randomness. This paper analyses the statistical structure of comparative evaluation under shared random seeds. Under this design, competing systems are evaluated using identical seeds, inducing matched stochastic realisations and yielding variance reduction whenever outcomes are positively correlated at the seed level. We characterise how this variance reduction propagates through the evaluation pipeline, leading to tighter confidence intervals, improved directional stability, higher statistical power, and substantial effective sample size gains at fixed computational budgets, while recovering standard independent evaluation when correlation is absent. We demonstrate these effects using an extended learning-based multi-agent economic simulator with an additional fiscal policy instrument, where paired evaluation exposes systematic differences in aggregate and distributional outcomes that remain statistically inconclusive under independent evaluation at fixed budgets.

\end{abstract}
\vspace{0.35cm}

  \end{@twocolumnfalse}
] 
\section{Introduction}

Learning-based simulators have become an important tool for studying multi-agent systems, supporting the evaluation of algorithms, design choices, and interventions in settings where analytic guarantees are unavailable. As such simulators and training pipelines grow more complex and computationally expensive, conclusions are increasingly drawn from a limited number of stochastic runs, placing greater weight on the reliability of comparative evaluation procedures.

A core difficulty is that comparative outcomes in learning-based simulators often exhibit substantial variance across random seeds, leading to unstable and potentially misleading comparative claims under standard evaluation practices\citep{10.5555/3540261.3542505}. Consequently, recent work has emphasised the need for more principled evaluation, including explicit treatment of variability across runs and greater attention to experimental design\citep{10.5555/3540261.3542505, colas2018randomseedsstatisticalpower, 10.5555/3504035.3504427, madhyastha-jain-2019-model, JMLR:v25:23-0183}.

This paper focuses on a complementary source of statistical efficiency in comparative evaluation that is often available in learning-based simulations but not routinely exploited. Although learning systems are typically treated as stochastic, their randomness is deterministic, governed by seeded pseudorandom number generators that reproduce identical stochastic realisations across executions. As a result, outcomes under alternative algorithms or interventions can exhibit positive correlation at the level of random seeds when they share the same underlying stochastic realisation, inducing aligned trajectories even when learned behaviour differs.

We formalise paired seed evaluation as an application of the classical paired comparison design, treating the random seed as the unit of randomisation and holding it fixed across alternative policies, with effects estimated through within-seed differences. As in standard experimental settings, positive seed-level correlation yields lower variance than independent evaluation, while zero correlation reduces to conventional independent evaluation without loss of efficiency. Accordingly, paired seed evaluation is best viewed as a design choice grounded in established experimental principles, adapted to learning-based simulators. By aligning competing systems to a common stochastic realisation, shared randomness is used to improve the reliability of empirical comparisons under fixed computational budgets.

\section{Evaluation in Learning-Based Simulators}

Across learning-based simulators and benchmarks, evaluation typically runs each algorithm or intervention under independently sampled random seeds and reports aggregate statistics\citep{10.5555/3540261.3542505,10.5555/3504035.3504427,JMLR:v25:23-0183,islam2017reproducibilitybenchmarkeddeepreinforcement}. This convention remains common in contemporary benchmark studies, including deep reinforcement learning benchmarks, multi-agent learning frameworks, and learning-based simulators for complex economic and social systems \citep{pmlr-v48-duan16,pmlr-v119-laskin20a,mi2024taxai,papoudakis2021benchmarkingmultiagentdeepreinforcement,yu2022surprisingeffectivenessppocooperative,zheng2021aieconomist}. Such evaluations are sufficient for demonstrating qualitative behaviour or feasibility.

When the goal is comparative inference under fixed computational budgets, however, this evaluation practice places strong implicit assumptions on how stochasticity is treated. In much of the literature, random seeds are used primarily to ensure reproducibility rather than as a source of exploitable statistical structure. Treating all runs as independent discards shared stochastic variation induced by the simulator, increasing uncertainty in the few-run regime. In learning-based simulators, random seeds provide a mechanism for controlling common randomness, as they jointly determine initial conditions, environment transitions, and algorithmic stochasticity.

Related pairing ideas appear in biomedicine\citep{NARIYA2023100791} and NLP\citep{peyrard-etal-2021-better}, where systems are compared on shared inputs rather than shared stochastic realisations. In learning-based simulators, paired seeds have occasionally appeared as implementation details in specific contexts\citep{immorlano2025technicalreportunifieddiffusion, wu2025sgmstatisticalgodelmachine}, but have not been studied as a general evaluation design choice.

Patterson et al.\citep{JMLR:v25:23-0183} recommend the use of paired statistical comparisons when analysing reinforcement learning experiments, framing pairing as a best practice at the analysis stage. This work takes that recommendation as a starting point and investigates when and why pairing is effective in learning-based multi-agent simulators. We characterise the statistical properties of paired seed evaluation by treating the seed as the fundamental unit of randomisation, quantify the extent of variance reduction achieved under positive seed-level correlation, and demonstrate these effects empirically. Although the underlying statistical principle is well established, its systematic application to learning-based simulators, in which seeds determine complex training trajectories rather than isolated random draws, merits explicit and careful treatment.

\section{Paired Seed Evaluation}

\subsection{Experimental Design and Estimand}

In learning-based simulators, outcomes depend jointly on policy choices and random seeds that govern initialization, environment stochasticity, and learning dynamics. We consider two policy regimes indexed by $d \in \{0,1\}$. Let $s \in S$ denote a random seed that fully determines all sources of randomness in the simulator, including initial conditions, environment stochasticity, and learning noise. For a given regime $d$ and seed $s$, let $Y(d,s)$ denote a scalar outcome of interest computed after training and evaluation. For a fixed seed $s$, outcomes under different regimes differ only through the policy intervention.

The estimand of interest is the average treatment effect

\begin{equation*}
	\Delta = \mathbb{E}_{s}\!\left[ Y(1,s) - Y(0,s) \right]
\end{equation*}

where the expectation is taken over the distribution of seeds.

Our objective is to estimate $\Delta$ with minimal variance, subject to a fixed computational budget.

\subsection{Independent and Paired Evaluation Designs}

A common evaluation strategy is to estimate $\Delta$ using independently trained runs under each regime. Under this design, outcomes for $d = 1$ and $d = 0$ are generated using independent seeds. While this estimator is unbiased, it does not exploit shared stochastic structure.

An alternative design is paired evaluation. Under this design, outcomes under both regimes are evaluated using the same seed. Intuitively, pairing holds fixed all seed-level sources of randomness unrelated to the policy intervention, thereby isolating policy induced differences.

Formally, let $\{s_i\}_{i=1}^n$ be a set of seeds. We define two unbiased estimators of $\Delta$. The independent estimator is given by

\begin{equation*}
	\hat{\Delta}_{\text{ind}}
	= \frac{1}{n} \sum_{i=1}^{n} Y\!\left(1, s_i^{(1)}\right)
	- \frac{1}{n} \sum_{i=1}^{n} Y\!\left(0, s_i^{(0)}\right)
\end{equation*}

where $s_i^{(1)}$ and $s_i^{(0)}$ are independent draws. The paired estimator is given by
\begin{equation*}
	\hat{\Delta}_{\text{pair}}
	= \frac{1}{n} \sum_{i=1}^{n} \Big( Y(1, s_i) - Y(0, s_i) \Big)
\end{equation*}

where the same seed $s_i$ is used for both regimes. Crucially, the difference between these estimators arises entirely from the evaluation design rather than from the estimand or estimator form. In particular, paired evaluation treats the random seed as the unit of randomization, holding fixed shared stochastic components by construction, whereas independent evaluation treats seeds as incidental noise and discards this structure.

\subsection{Variance Reduction under Paired Evaluation}

We now formalise the statistical advantage of paired evaluation.

\begin{theorem}
	
	Let $Y(1,s)$ and $Y(0,s)$ be square-integrable random variables. The variance of the paired estimator satisfies
	\begin{equation}
		\operatorname{Var}\!\left( \hat{\Delta}_{\text{pair}} \right)
		=
		\operatorname{Var}\!\left( \hat{\Delta}_{\text{ind}} \right)
		-
		\frac{2}{n}\,
		\operatorname{Cov}\!\left( Y(1,s), Y(0,s) \right)
		\label{eq:variance-reduction}
		\tag{1}
	\end{equation}
	
\end{theorem}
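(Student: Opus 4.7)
The plan is to derive both variances directly from the estimator definitions, invoke independence of seed draws at the appropriate level, and subtract. No estimation-theoretic or asymptotic machinery is needed: the identity is elementary and follows from the standard variance-of-a-sum identity, with square-integrability of $Y(1,s)$ and $Y(0,s)$ serving only to ensure every variance and covariance that appears is finite.

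First I would handle the independent design. Since $\hat{\Delta}_{\text{ind}}$ is the difference of two sample means formed from disjoint i.i.d.\ seed batches $\{s_i^{(1)}\}$ and $\{s_i^{(0)}\}$, the cross-arm covariance between the two sample means vanishes by construction, and scaling by $1/n$ gives
\begin{equation*}
\operatorname{Var}(\hat{\Delta}_{\text{ind}}) = \frac{\operatorname{Var}(Y(1,s)) + \operatorname{Var}(Y(0,s))}{n}.
\end{equation*}

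Next I would handle the paired design by defining per-seed differences $D_i = Y(1,s_i) - Y(0,s_i)$, which are i.i.d.\ across $i$ because the seeds are. The paired estimator is then the sample mean of the $D_i$, so $\operatorname{Var}(\hat{\Delta}_{\text{pair}}) = \operatorname{Var}(D_1)/n$, and expanding $\operatorname{Var}(D_1)$ via the standard variance-of-a-difference identity produces exactly the additional term $-2\operatorname{Cov}(Y(1,s), Y(0,s))/n$ relative to the independent expression. Subtracting the two formulas yields the claimed identity.

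There is no real obstacle here, only a bookkeeping point worth stating explicitly in the write-up: the covariance $\operatorname{Cov}(Y(1,s), Y(0,s))$ must be understood as taken under a \emph{single} seed $s$ shared across both arms, whereas the marginal variances $\operatorname{Var}(Y(d,s))$ are invariant to whether the seed is shared or independently drawn. This asymmetry is precisely what lets the two designs share the estimand $\Delta$ while differing in estimator variance, and it is what the theorem quantifies.
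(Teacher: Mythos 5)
Your proposal is correct and follows essentially the same route as the paper's proof: compute the independent-design variance as the sum of the two marginal variances over $n$, write the paired estimator as the sample mean of i.i.d.\ within-seed differences, expand the variance of that difference, and subtract. The only addition is your explicit remark that the covariance is taken under a single shared seed while the marginal variances are design-invariant, which the paper leaves implicit but does not change the argument.
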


Proof can be found in Appendix \ref{app:proof_t1}.

\subsection{Interpretation and Corollary}
\label{subsec:variance-reduction-corollary}

Theorem \ref{eq:variance-reduction} establishes that paired evaluation is statistically equivalent to a common random-numbers estimator, with the extent of variance reduction governed by the covariance between outcomes across the two regimes. While common random numbers are well understood in Monte Carlo estimation\citep{law1982simulation}, their role in learning-based simulators is fundamentally different: here, random seeds index entire training trajectories and induced environments, rather than single stochastic draws.

\begin{corollary}
	If 
	\[
	\operatorname{Cov}\!\left( Y(1,s), Y(0,s) \right) > 0
	\]
	then paired evaluation strictly reduces estimator variance relative to independent evaluation.
	Moreover, letting
	\[
	\begin{aligned}
		\sigma_i &:= \sqrt{\operatorname{Var}\!\left( Y(i,s) \right)} \quad i \in \{0,1\} \\
		\rho &:= \operatorname{Corr}\!\left( Y(1,s), Y(0,s) \right)
	\end{aligned}
	\]
	the reduction in variance achieved by paired evaluation satisfies
	\begin{equation}
		\operatorname{Var}\!\left( \hat{\Delta}_{\text{ind}} \right)
		-
		\operatorname{Var}\!\left( \hat{\Delta}_{\text{pair}} \right)
		=
		\frac{2}{n}\,
		\rho\,\sigma_1\,\sigma_0
		\tag{2}
		\label{eq:corollary-positivecov}
	\end{equation}
\end{corollary}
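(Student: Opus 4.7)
The plan is to derive both claims of the corollary as immediate algebraic consequences of Theorem~1, since equation~(\ref{eq:variance-reduction}) already expresses the full difference between the two estimator variances as a linear function of the seed-level covariance. No additional probabilistic machinery is required beyond what is already in place.

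First, I would rearrange equation~(\ref{eq:variance-reduction}) to isolate the variance difference, writing $\operatorname{Var}(\hat{\Delta}_{\text{ind}}) - \operatorname{Var}(\hat{\Delta}_{\text{pair}}) = \frac{2}{n}\operatorname{Cov}(Y(1,s), Y(0,s))$. The strict reduction claim then follows directly: if the seed-level covariance is strictly positive, the right-hand side is strictly positive (since $n \geq 1$), and therefore $\operatorname{Var}(\hat{\Delta}_{\text{pair}}) < \operatorname{Var}(\hat{\Delta}_{\text{ind}})$. The square-integrability of $Y(1,s)$ and $Y(0,s)$ already assumed in Theorem~1 guarantees that the covariance is well defined and finite, so no further regularity conditions are needed.

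Second, for the magnitude claim I would substitute the standard factorisation $\operatorname{Cov}(Y(1,s), Y(0,s)) = \rho\,\sigma_1\,\sigma_0$ into the rearranged identity, obtaining equation~(\ref{eq:corollary-positivecov}) directly. Both $\sigma_0$ and $\sigma_1$ are finite by square-integrability, and $\rho$ is well defined whenever both standard deviations are strictly positive. The degenerate case in which $\sigma_i = 0$ for some $i$ forces the covariance and hence both sides of~(\ref{eq:corollary-positivecov}) to vanish, so the identity holds trivially there as well and can be dispatched in a single sentence.

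Since the corollary is a direct algebraic consequence of Theorem~1 combined with the definition of the correlation coefficient, there is no substantive mathematical obstacle. The only care required is to flag the degenerate zero-variance case so that the correlation $\rho$ in equation~(\ref{eq:corollary-positivecov}) is unambiguously interpretable, after which the proof inherits its full probabilistic content from Theorem~1 and reduces to one rearrangement followed by one substitution.
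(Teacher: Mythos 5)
Your proposal is correct and matches the paper's (implicit) argument exactly: the paper presents the corollary as an immediate consequence of Theorem~1, obtained by rearranging equation~(\ref{eq:variance-reduction}) and substituting $\operatorname{Cov}(Y(1,s),Y(0,s)) = \rho\,\sigma_1\,\sigma_0$. Your additional remark about the degenerate $\sigma_i = 0$ case is a harmless refinement the paper does not bother to state.
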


Equation \ref{eq:corollary-positivecov} shows that paired evaluation is most effective when outcomes under alternative regimes exhibit strong seed-level correlation, as shared stochastic variation cancels in within-seed differences. In learning-based simulators, outcomes under alternative policies often exhibit positive correlation at the level of random seeds. When evaluations are conducted under a common seed, alternative regimes are exposed to the same realised stochastic components of the simulator and training pipeline. As a result, variation across seeds induces systematically aligned responses across treatments, even when learned behaviour differs.

Paired evaluation treats the random seed as the unit of randomisation, ensuring that dominant sources of simulator and training stochasticity are held fixed across regimes. In this way, shared randomness is converted from a source of evaluation noise into a controlled component of the experimental design, directly analogous to matched or blocked designs in classical experiments.

\begin{figure*}[t]
	\centering
	\subfloat[Confidence interval half-width]{
		\includegraphics[width=0.30\textwidth]{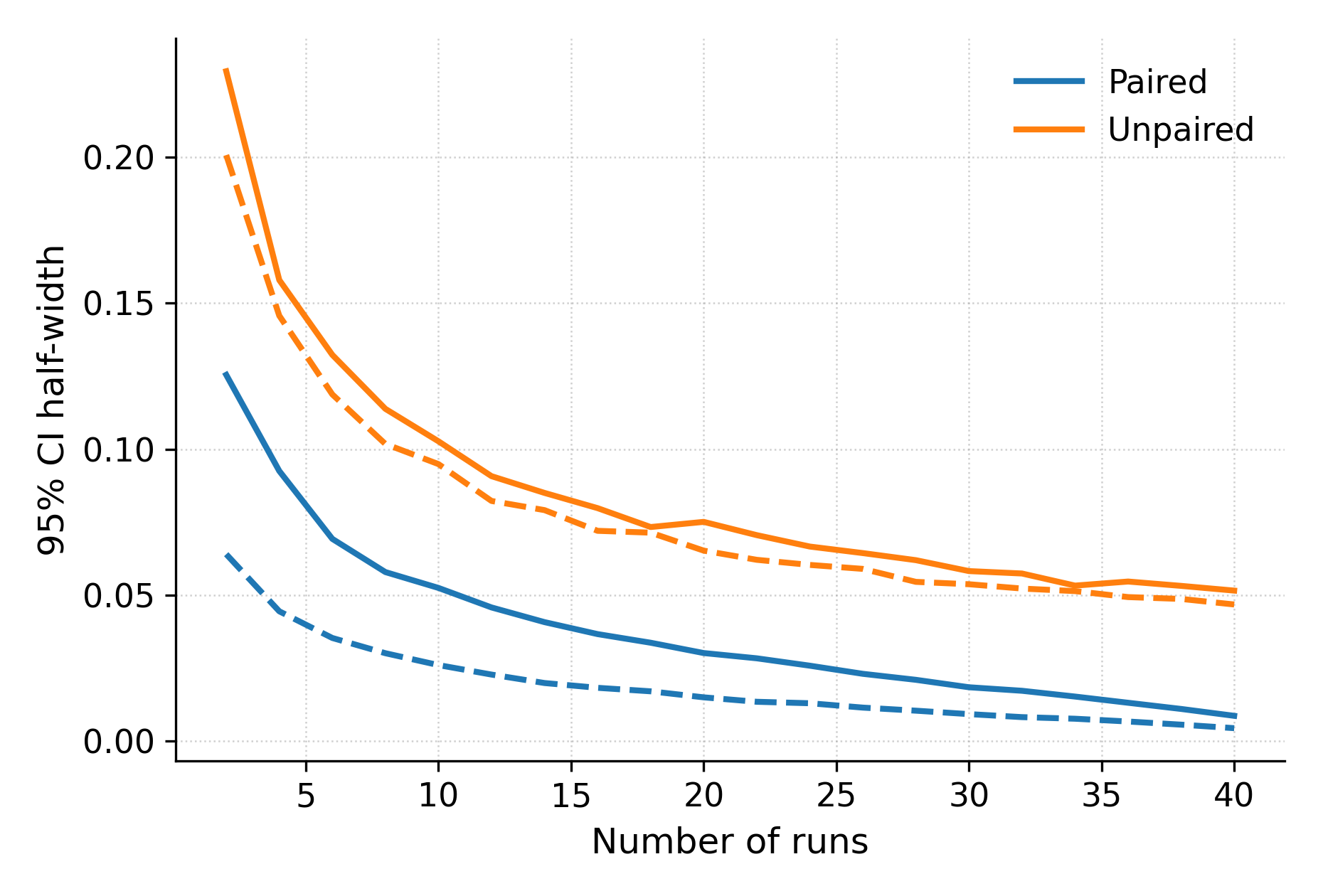}
	}
	\hfill
	\subfloat[Statistical power]{
		\includegraphics[width=0.30\textwidth]{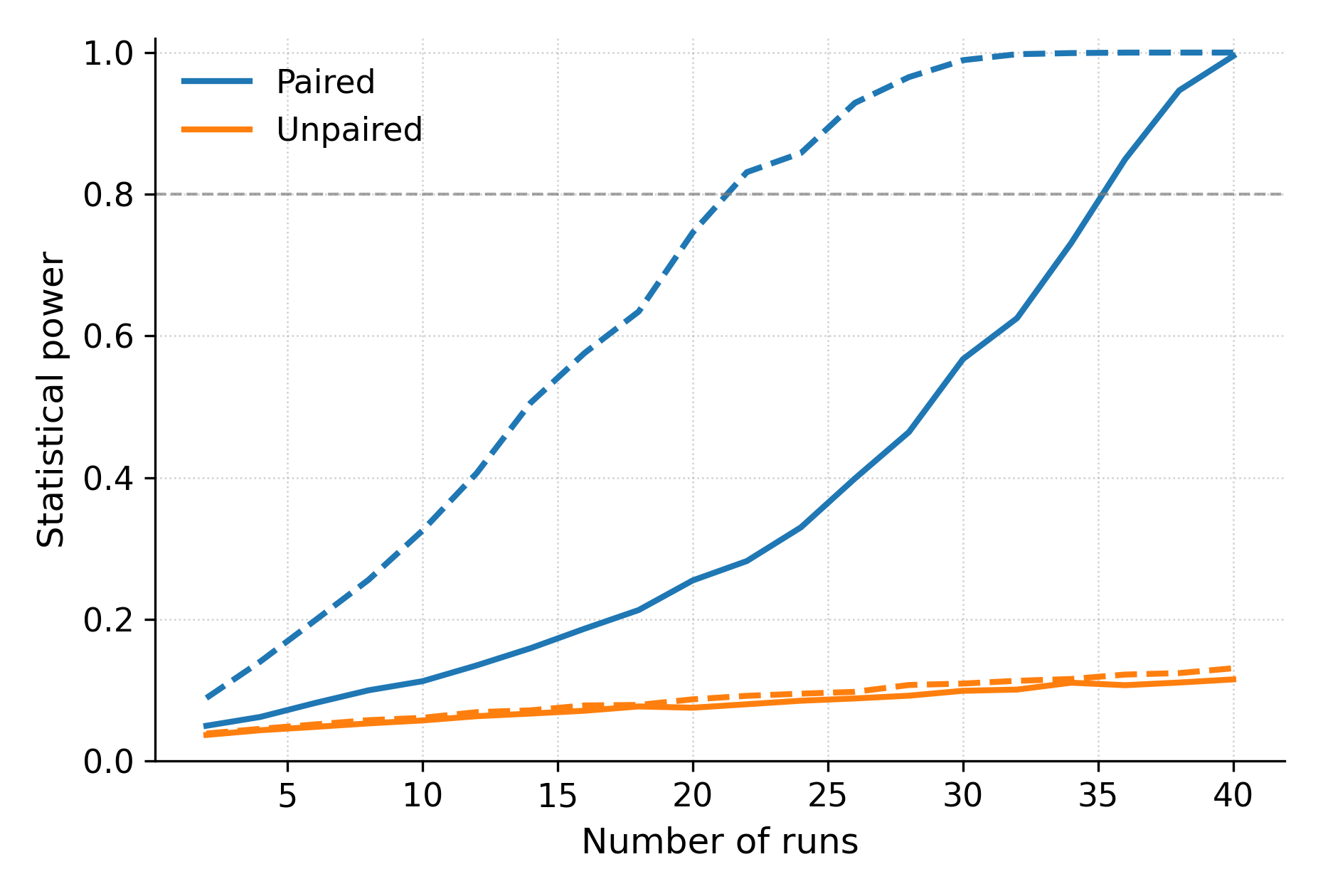}
	}
	\hfill
	\subfloat[Directional stability]{
		\includegraphics[width=0.30\textwidth]{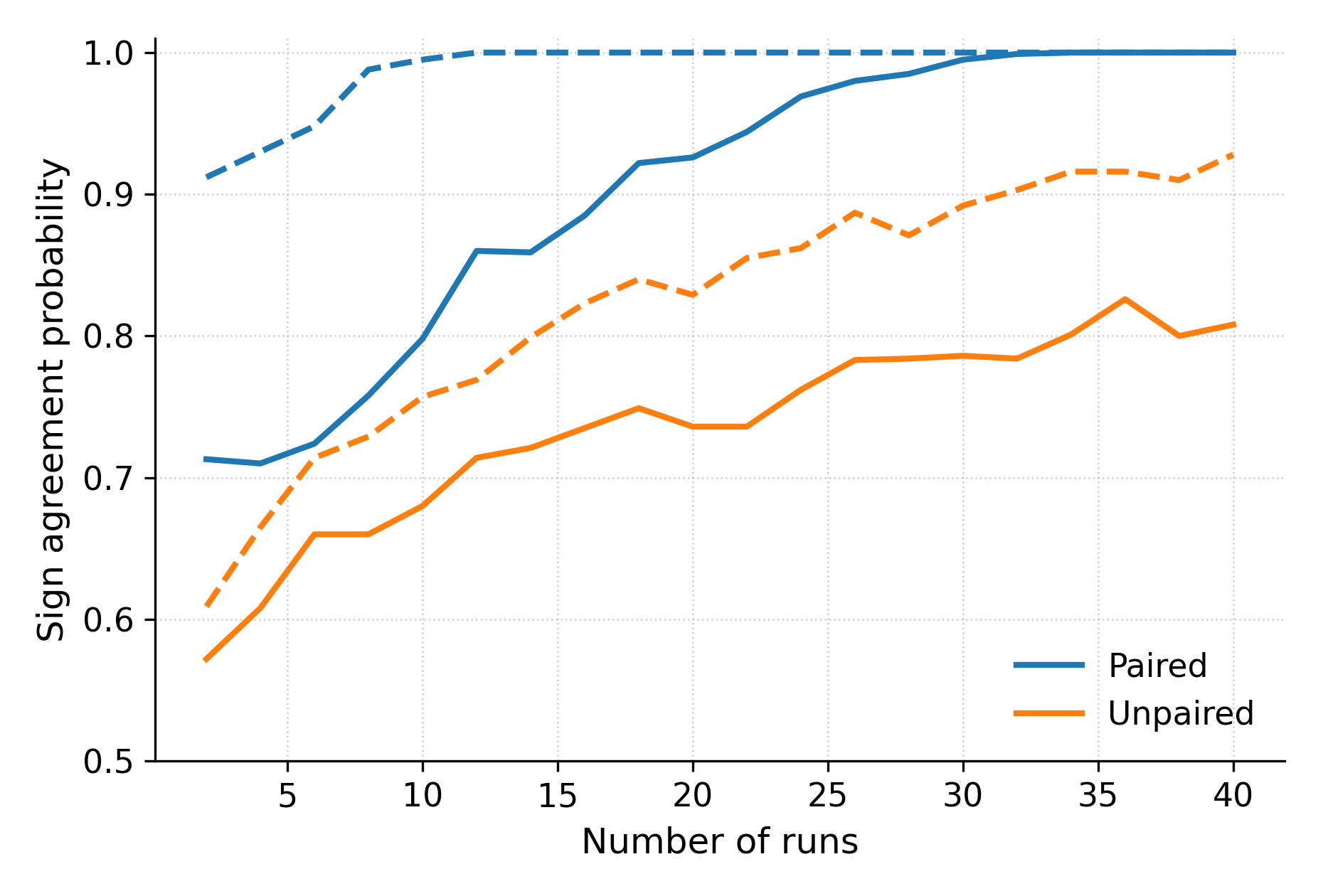}
	}
	\caption{$(a)$ 95\% confidence interval half-width. $(b)$ Statistical power for a 2\% change in wealth Gini. $(c)$ Probability of sign agreement with the estimate. Results are shown for the GDP (solid) and Gini (dashed) objectives using wealth Gini as the outcome metric.}
	\label{fig:evaluation_reliability}
\end{figure*}

\subsection{Evidence of Positive Seed-Level Correlation}
\label{subsec:positive-empirical}

We examine whether outcomes under paired evaluation exhibit positive correlation at the level of random seeds and verify this empirically using a learning-based economic simulator. The evaluation compares a baseline learned policy with an augmented policy that differs only through the inclusion of an additional policy instrument. All comparisons are conducted under controlled conditions, with differences attributable solely to the policy intervention. Full details of the simulator and experimental setup are provided in Section \ref{sec:case_study}.

Paired evaluations are conducted by evaluating both policy regimes under identical random seeds, and Pearson correlations are computed across seeds using final evaluation metrics. All objectives are evaluated using 22 shared seeds, with each seed inducing a matched stochastic realisation across policy regimes.

\begin{table}[h]
	\centering
	\caption{Seed-level correlation between paired policy outcomes}
	\label{tab:seed_correlation}
	\begin{tabular}{lcccc}
		\toprule
		Objective 
		& $\rho$ (GDP) 
		& $\rho$ (Wealth Gini)  \\
		\midrule
		GDP 
		& 0.917 
		& 0.681  \\
		
		GDP--Gini 
		& 0.993  
		& 0.963  \\
		
		Gini 
		& 0.951 
		& 0.910 \\
		\bottomrule
	\end{tabular}
	
	\vspace{0.5em}
	\footnotesize
	\textit{Notes:} All objectives use 22 seeds. Outcomes correspond to final evaluation metrics.
\end{table}

Table \ref{tab:seed_correlation} reports seed-level correlations for representative aggregate and distributional outcomes. Across all objectives and metrics, correlations are large and positive, typically exceeding 0.9, indicating strong alignment of outcomes across policy regimes at the seed level. This supports the applicability of paired seed evaluation and is consistent with the variance reductions observed relative to independent evaluation.

\subsection{Implications for Evaluation Sample Size}

This section examines how the variance reduction induced by paired seed evaluation translates into improved finite-sample inference under fixed computational budgets. We measure sample size by the number of random seeds $n$, corresponding to $r = 2n$ total simulator runs under paired evaluation.

For a fixed number of simulator runs, the standard error of the paired estimator is related to the corresponding independent standard error by

\begin{equation}
	\mathrm{SE}_{\text{pair}}
	=
	\mathrm{SE}_{\text{ind}}
	\sqrt{
		1
		-
		\frac{
			2\rho\,\sigma_1\,\sigma_0
		}{
			\sigma_1^{2}
			+
			\sigma_0^{2}
		}
	}
	\label{eq:se}
	\tag{3}
\end{equation}

which is strictly smaller whenever the seed-level correlation $\rho$ is positive. Confidence intervals constructed using paired evaluation are therefore uniformly tighter than those obtained under independent evaluation at the same computational budget. Figure \ref{fig:evaluation_reliability}$(a)$ illustrates this effect by plotting confidence interval half-widths against the number of evaluation runs under paired and independent designs. Paired evaluation consistently yields narrower intervals, with the largest gains in low-budget regimes.

By increasing the signal-to-noise ratio, paired evaluation improves statistical power and stabilises the direction of estimated effects at fixed budgets. Figures \ref{fig:evaluation_reliability}$(b)$ and \ref{fig:evaluation_reliability}$(c)$ show that paired evaluation achieves higher power and greater sign agreement with high-precision reference estimates using fewer runs. Further details are provided in Appendix \ref{app:power_disc}.

\subsection{Effective Sample Size}

The statistical advantage of paired evaluation can be summarised through an effective sample size, which translates variance reduction into an equivalent number of independently seeded simulator runs. This framing expresses the benefit of pairing directly in terms of computational cost.

Formally, we define the effective sample size $r_{\mathrm{eff}}$ as the number of independent runs required to achieve the same estimator variance as a paired evaluation based on $r$ total runs. Equating the variance of the paired and independent estimators yields
\begin{equation}
	r_{\mathrm{eff}}
	=
	r \cdot
	\frac{
		\sigma_1^{2}
		+
		\sigma_0^{2}
	}{
		\sigma_1^{2}
		+
		\sigma_0^{2}
		-
		2\rho\,\sigma_1\,\sigma_0
	}
	\label{eq:effective_sample}
	\tag{4}
\end{equation}
where $\rho$ denotes the seed-level correlation between outcomes. When $\sigma_1^{2} \approx \sigma_0^{2}$, this expression simplifies to
\begin{equation*}
	r_{\mathrm{eff}} \approx \frac{r}{1 - \rho}
\end{equation*}

Paired seed evaluation is a property of experimental design rather than of the estimator or learning algorithm. By treating the random seed as the unit of randomisation, shared stochasticity is held fixed across regimes, yielding strictly tighter uncertainty whenever seed-level correlation is non-negative. We next examine how these design considerations affect empirical inference in a learning-based economic simulator.

\section{Case Study}
\label{sec:case_study}

We illustrate the practical implications of pairing through an extension of the TaxAI simulator\citep{mi2024taxai}. We augment the government action space with an additional endogenously learned policy corresponding to a universal basic income (UBI) specified as a fraction of GDP, and compare transfer and no-transfer regimes under identical training and evaluation budgets. While this case study focuses on a single simulator and intervention type, it serves to demonstrate the methodology and motivate broader investigation of when and how paired evaluation should be applied in practice.

\subsection{Experimental Setup}

We consider two policy regimes: a baseline learned tax-and-transfer policy without universal transfers, and an augmented policy with a UBI rate, specified as a fraction of contemporaneous GDP and learned endogenously as part of the government policy. Both regimes are trained using identical learning algorithms, hyperparameters, and training horizons.

The government policy is trained under three objectives: maximizing per capita GDP, minimizing inequality via the Gini coefficient, and a hybrid GDP–Gini objective balancing efficiency and equity. Under each objective, the government jointly selects tax parameters and the UBI rate based on the chosen reward. The environment comprises 100 heterogeneous households whose interactions determine aggregate and distributional outcomes. Evaluation is performed in a stationary post-training environment, with metrics reported as time averages over a fixed horizon.

We evaluate policies under two designs: paired seed evaluation, where both regimes share identical random seeds, and unpaired evaluation, where regimes use independent seeds. Computational budgets are matched, with one paired seed corresponding to two simulator runs. Policy effects are assessed using aggregate macroeconomic outcomes, reporting mean differences and confidence intervals for each metric. Further details are provided in Appendix \ref{app:taxai_ubi}.

\begin{table*}[t]
	\centering
	\caption{Estimated policy effects under the GDP--GINI objective}
	\label{tab:gdp_gini_results}
	\begin{tabular}{lcccccc}
		\toprule
		Metric
		& $\hat{\Delta}$
		& \multicolumn{2}{c}{Paired 95\% CI}
		& \multicolumn{2}{c}{Independent 95\% CI} \\
		\cmidrule(lr){3-4} \cmidrule(lr){5-6}
		& & Lower & Upper & Lower & Upper \\
		\midrule
		Wealth Gini
		& $-0.039$
		& $-0.052$
		& $-0.027$
		& $-0.097$
		& $0.018$ \\
		
		Income Gini
		& $-0.022$
		& $-0.044$
		& $-0.000$
		& $-0.077$
		& $0.033$ \\
		
		Per-capita GDP ($\times 10^{5}$)
		& $-0.942$
		& $-3.142$
		& $1.257$
		& $-23.051$
		& $21.165$ \\
		
		Tax--GDP ratio
		& $0.045$
		& $0.008$
		& $0.081$
		& $-0.058$
		& $0.147$ \\
		\bottomrule
	\end{tabular}
	\begin{minipage}{\linewidth}
		\footnotesize
		\textit{Notes:} Fixed evaluation budget of 44 runs. Point estimates coincide across evaluation designs by construction; differences arise solely from uncertainty estimation.
	\end{minipage}
\end{table*}

\subsection{Paired and Unpaired Evaluation Outcomes}

Table~\ref{tab:gdp_gini_results} reports policy effect estimates under the GDP--Gini objective using a fixed budget of 44 simulator runs. Because both estimators compute average differences and subtraction is linear, point estimates are identical by construction; all differences arise solely from how variability is quantified.

Across outcomes, pairing yields narrower confidence intervals than independent evaluation. For inequality measures, this reduction alters inference: both wealth and income Gini coefficients exclude zero under paired design, while corresponding independent intervals include zero. Similar variance inflation appears for per-capita GDP and the tax-to-GDP ratio, where unpaired intervals are an order of magnitude wider and uninformative. These differences arise from positive seed-level correlation, which pairing exploits and independent evaluation discards. Consistent confidence interval narrowing is observed across metrics (Appendix Table \ref{tab:appdxgdp_gini_results}).

A possible explanation for the consistently high correlation between outcomes with and without UBI lies in the structure of the policy optimisation landscape. Across random initialisations, learning appears to converge to a small number of distinct fiscal regimes, corresponding to different basins of attraction, which leads to substantial variation in learned policies and outcomes across seeds. However, for a fixed seed, the set of accessible basins is likely to remain largely unchanged, so introducing UBI does not alter the regime selection mechanism but instead induces local adjustments within the same basin.

This hypothesis is supported by a policy-level distance analysis. For GDP-related outcomes, the median distance between policies learned with and without UBI under a fixed seed is substantially smaller than the variation observed across seeds. In particular, median within-seed distances in average GDP are typically below 15 percent of the corresponding cross-seed variation and fall to around 6 percent under the GDP–Gini objective. Distributional outcomes exhibit a weaker but consistent pattern, with within-seed distances in wealth Gini around 50 to 70 percent of cross-seed variation. Taken together, these results suggest that the high observed correlations primarily reflect shared convergence to the same fiscal regime at the seed level, rather than insensitivity of outcomes to the UBI intervention. Detailed statistics are reported in Appendix Table \ref{tab:appdx_policy_distance}.

\subsection{Economic Effects Revealed by Paired Seed Evaluation}

This subsection does not advance new policy claims on UBI. Its purpose is to show how reduced evaluation noise under paired seed evaluation reveals qualitative economic patterns that are obscured under unpaired evaluation.

Under pairing, coherent patterns emerge across objectives. When training targets GDP, inequality measures show no systematic change, with confidence intervals spanning zero. Under the Gini and GDP–Gini objectives, paired evaluation instead yields concentrated estimates indicating reductions in both wealth and income inequality. Unpaired evaluation fails to recover these patterns, with confidence intervals sufficiently wide that inequality effects remain indistinguishable from noise, even when point estimates align.

The same contrast appears for aggregate outcomes. Under paired design, per capita GDP effects are tightly bounded around zero across objectives, indicating limited output distortion. Unpaired evaluation admits large positive and negative effects, precluding inference. Similarly, pairing identifies modest increases in the tax-to-GDP ratio required to finance UBI, while unpaired evaluation remains uninformative. Owing to the progressive tax structure, this burden falls disproportionately on higher-income and higher-wealth households.

These results indicate that the economic regularities highlighted here are not artefacts of policy design, but of evaluation design. Paired evaluation reveals objective-aligned redistribution, fiscally feasible financing, and limited output distortion, whereas unpaired evaluation obscures these qualitative patterns under the same computational budget.

\section{Limitations}

The statistical efficiency gains from paired seed evaluation depend on the presence of non-negative seed-level correlation between outcomes under comparison. While positive correlation arises naturally in many learning-based simulators, where random seeds govern initialization, procedural generation, and large portions of the training dynamics, this condition is not guaranteed universally. When correlation is positive, pairing yields strict variance reduction, and when it is approximately zero, paired evaluation recovers standard independent evaluation. In contrast, negative seed-level correlation reverses the variance comparison, in which case paired evaluation may be less efficient than independent designs.

The magnitude of correlation may vary across evaluation metrics: structured or constrained objectives that depend on shared macro-level state variables often exhibit stronger seed-level alignment than raw rewards dominated by short-horizon or idiosyncratic noise. Importantly, correlation is only required for the specific metric being compared. While paired inference remains statistically valid regardless of correlation sign, efficiency gains from pairing arise only when seed-level correlation is non-negative.

Paired seed evaluation is most effective when random seeds control the primary sources of training variability. Environments with substantial sources of stochasticity not governed by the random seed may exhibit weaker seed-level correlation, limiting achievable variance reduction. Similarly, if compared algorithms use seeds to control fundamentally different random processes, the extent of shared structure may be limited. For pairing to deliver substantial gains, it is therefore important that experimental infrastructure ensures reproducibility and that seeds govern overlapping aspects of the learning process across all conditions under comparison.

\section{Conclusion}

This paper shows that comparative evaluation in learning-based simulators can be improved by exploiting shared sources of randomness across alternative interventions. When outcomes under competing treatments are generated using identical random seeds, they often exhibit positive correlation arising from common initial conditions, stochastic shocks, and overlapping learning trajectories. Treating the random seed as the unit of randomisation and reusing it across alternatives leverages this structure to reduce estimator variance when seed-level outcomes are positively correlated, without increasing the number of simulator runs.

We showcase how paired evaluation designs can reduce variance via seed reuse, improving confidence intervals, statistical power, and effective sample size under fixed computational budgets. When seed-level correlation is weak or absent, paired evaluation recovers standard independent evaluation, while under non-negative correlation it yields lower estimator variance. These results highlight evaluation design as an important determinant of reliability in learning-based simulation studies.

While the empirical analysis is based on a single simulator and intervention type, it serves primarily to demonstrate the methodology and clarify the conditions under which paired evaluation is effective. The broader applicability of paired seed evaluation across different simulators, training pipelines, and metrics remains an open question. In particular, understanding how modern sources of stochasticity interact with seed control, and whether paired designs can be incorporated into standard benchmarking protocols, are natural directions for further investigation.

\clearpage

\normalsize
\bibliography{references}

@inproceedings{10.5555/3540261.3542505,
author = {Agarwal, Rishabh and Schwarzer, Max and Castro, Pablo Samuel and Courville, Aaron and Bellemare, Marc G.},
title = {Deep reinforcement learning at the edge of the statistical precipice},
year = {2021},
isbn = {9781713845393},
publisher = {Curran Associates Inc.},
address = {Red Hook, NY, USA},
booktitle = {Proceedings of the 35th International Conference on Neural Information Processing Systems},
articleno = {2244},
numpages = {17},
series = {NIPS '21}
}

@misc{colas2018randomseedsstatisticalpower,
      title={How Many Random Seeds? Statistical Power Analysis in Deep Reinforcement Learning Experiments}, 
      author={Cédric Colas and Olivier Sigaud and Pierre-Yves Oudeyer},
      year={2018},
      eprint={1806.08295},
      archivePrefix={arXiv},
      primaryClass={cs.LG},
      url={https://arxiv.org/abs/1806.08295}, 
}

@inproceedings{10.5555/3504035.3504427,
author = {Henderson, Peter and Islam, Riashat and Bachman, Philip and Pineau, Joelle and Precup, Doina and Meger, David},
title = {Deep reinforcement learning that matters},
year = {2018},
isbn = {978-1-57735-800-8},
publisher = {AAAI Press},
booktitle = {Proceedings of the Thirty-Second AAAI Conference on Artificial Intelligence and Thirtieth Innovative Applications of Artificial Intelligence Conference and Eighth AAAI Symposium on Educational Advances in Artificial Intelligence},
articleno = {392},
numpages = {8},
location = {New Orleans, Louisiana, USA},
series = {AAAI'18/IAAI'18/EAAI'18}
}

@inproceedings{madhyastha-jain-2019-model,
    title = "On Model Stability as a Function of Random Seed",
    author = "Madhyastha, Pranava  and
      Jain, Rishabh",
    editor = "Bansal, Mohit  and
      Villavicencio, Aline",
    booktitle = "Proceedings of the 23rd Conference on Computational Natural Language Learning (CoNLL)",
    month = nov,
    year = "2019",
    address = "Hong Kong, China",
    publisher = "Association for Computational Linguistics",
    url = "https://aclanthology.org/K19-1087/",
    doi = "10.18653/v1/K19-1087",
    pages = "929--939",
    }

@article{JMLR:v25:23-0183,
  author  = {Andrew Patterson and Samuel Neumann and Martha White and Adam White},
  title   = {Empirical Design in Reinforcement Learning},
  journal = {Journal of Machine Learning Research},
  year    = {2024},
  volume  = {25},
  number  = {318},
  pages   = {1--63},
}

@misc{islam2017reproducibilitybenchmarkeddeepreinforcement,
      title={Reproducibility of Benchmarked Deep Reinforcement Learning Tasks for Continuous Control}, 
      author={Riashat Islam and Peter Henderson and Maziar Gomrokchi and Doina Precup},
      year={2017},
      eprint={1708.04133},
      archivePrefix={arXiv},
      primaryClass={cs.LG},
      url={https://arxiv.org/abs/1708.04133}, 
}

@InProceedings{pmlr-v119-laskin20a,
  title = 	 {{CURL}: Contrastive Unsupervised Representations for Reinforcement Learning},
  author =       {Laskin, Michael and Srinivas, Aravind and Abbeel, Pieter},
  booktitle = 	 {Proceedings of the 37th International Conference on Machine Learning},
  pages = 	 {5639--5650},
  year = 	 {2020},
  editor = 	 {III, Hal Daumé and Singh, Aarti},
  volume = 	 {119},
  series = 	 {Proceedings of Machine Learning Research},
  month = 	 {13--18 Jul},
  publisher =    {PMLR},
}

@InProceedings{pmlr-v48-duan16,
  title = 	 {Benchmarking Deep Reinforcement Learning for Continuous Control},
  author = 	 {Duan, Yan and Chen, Xi and Houthooft, Rein and Schulman, John and Abbeel, Pieter},
  booktitle = 	 {Proceedings of The 33rd International Conference on Machine Learning},
  pages = 	 {1329--1338},
  year = 	 {2016},
  editor = 	 {Balcan, Maria Florina and Weinberger, Kilian Q.},
  volume = 	 {48},
  series = 	 {Proceedings of Machine Learning Research},
  address = 	 {New York, New York, USA},
  month = 	 {20--22 Jun},
  publisher =    {PMLR},
}

@inproceedings{mi2024taxai,
author = {Mi, Qirui and Xia, Siyu and Song, Yan and Zhang, Haifeng and Zhu, Shenghao and Wang, Jun},
title = {TaxAI: A Dynamic Economic Simulator and Benchmark for Multi-agent Reinforcement Learning},
year = {2024},
isbn = {9798400704864},
publisher = {International Foundation for Autonomous Agents and Multiagent Systems},
address = {Richland, SC},
booktitle = {Proceedings of the 23rd International Conference on Autonomous Agents and Multiagent Systems},
pages = {1390–1399},
numpages = {10},
keywords = {benchmark, dynamic economic simulator, multi-agent reinforcement learning, optimal tax policy, tax evasion behavior},
location = {Auckland, New Zealand},
series = {AAMAS '24}
}

@misc{papoudakis2021benchmarkingmultiagentdeepreinforcement,
      title={Benchmarking Multi-Agent Deep Reinforcement Learning Algorithms in Cooperative Tasks}, 
      author={Georgios Papoudakis and Filippos Christianos and Lukas Schäfer and Stefano V. Albrecht},
      year={2021},
      eprint={2006.07869},
      archivePrefix={arXiv},
      primaryClass={cs.LG},
      url={https://arxiv.org/abs/2006.07869}, 
}

@misc{yu2022surprisingeffectivenessppocooperative,
      title={The Surprising Effectiveness of PPO in Cooperative, Multi-Agent Games}, 
      author={Chao Yu and Akash Velu and Eugene Vinitsky and Jiaxuan Gao and Yu Wang and Alexandre Bayen and Yi Wu},
      year={2022},
      eprint={2103.01955},
      archivePrefix={arXiv},
      primaryClass={cs.LG},
      url={https://arxiv.org/abs/2103.01955}, 
}

@article{zheng2021aieconomist,
  author  = {Zheng, Stephan and Trott, Alexander and Srinivasa, Sunil and Parkes, David C. and Socher, Richard},
  title   = {The AI Economist: Optimal Economic Policy Design via Two-Level Deep Reinforcement Learning},
  journal = {arXiv preprint arXiv:2108.02755},
  year    = {2021}
}

@article{NARIYA2023100791,
title = {Paired evaluation of machine-learning models characterizes effects of confounders and outliers},
journal = {Patterns},
volume = {4},
number = {8},
pages = {100791},
year = {2023},
issn = {2666-3899},
doi = {https://doi.org/10.1016/j.patter.2023.100791},
author = {Maulik K. Nariya and Caitlin E. Mills and Peter K. Sorger and Artem Sokolov},
keywords = {Machine Learning, Small-sample studies, Model evaluation, Outlier Detection, Confounding Variables},
}

@inproceedings{peyrard-etal-2021-better,
    title = "Better than Average: Paired Evaluation of {NLP} systems",
    author = "Peyrard, Maxime  and
      Zhao, Wei  and
      Eger, Steffen  and
      West, Robert",
    editor = "Zong, Chengqing  and
      Xia, Fei  and
      Li, Wenjie  and
      Navigli, Roberto",
    booktitle = "Proceedings of the 59th Annual Meeting of the Association for Computational Linguistics and the 11th International Joint Conference on Natural Language Processing (Volume 1: Long Papers)",
    month = aug,
    year = "2021",
    address = "Online",
    publisher = "Association for Computational Linguistics",
    doi = "10.18653/v1/2021.acl-long.179",
    pages = "2301--2315",
}

@misc{immorlano2025technicalreportunifieddiffusion,
      title={Technical Report: Towards Unified Diffusion Models for Multi-Model Climate Emulation at Scale}, 
      author={Francesco Immorlano and Elijah Tavares and Felix Draxler and Padhraic Smyth and Pierre Gentine and Stephan Mandt},
      year={2025},
      eprint={2511.22970},
      archivePrefix={arXiv},
      primaryClass={physics.ao-ph},
      url={https://arxiv.org/abs/2511.22970}, 
}

@misc{wu2025sgmstatisticalgodelmachine,
      title={SGM: A Statistical Godel Machine for Risk-Controlled Recursive Self-Modification}, 
      author={Xuening Wu and Shenqin Yin and Yanlan Kang and Xinhang Zhang and Qianya Xu and Zeping Chen and Wenqiang Zhang},
      year={2025},
      eprint={2510.10232},
      archivePrefix={arXiv},
      primaryClass={cs.LG},
      url={https://arxiv.org/abs/2510.10232}, 
}

@book{law1982simulation,
  title={Simulation Modeling and Analysis},
  author={Law, A.M. and Kelton, W.D.},
  isbn={9780070366961},
  lccn={81005777},
  series={McGraw-Hill series in industrial engineering and management science},
  url={https://books.google.co.in/books?id=xOBQAAAAMAAJ},
  year={1982},
  publisher={McGraw-Hill}
}

@inproceedings{yang2018meanfield,
  author    = {Yang, Yaodong and Luo, Rui and Li, Minne and Zhou, Ming and Zhang, Weinan and Wang, Jun},
  title     = {Mean Field Multi-Agent Reinforcement Learning},
  booktitle = {Proceedings of the 35th International Conference on Machine Learning (ICML 2018)},
  year      = {2018},
  publisher = {PMLR},
  series    = {Proceedings of Machine Learning Research},
  pages     = {5571--5580}
}

\clearpage
\appendix
\section{Appendix}
\label{app:proofs_verifications}

\subsection{Proof of Theorem 1}
\label{app:proof_t1}

\begin{proof}
	Under independent evaluation, the two sample means are computed using independent random seeds. Therefore,
	\begin{equation*}
		\operatorname{Var}\!\left( \hat{\Delta}_{\text{ind}} \right)
		=
		\frac{1}{n}\operatorname{Var}\!\left( Y(1,s) \right)
		+
		\frac{1}{n}\operatorname{Var}\!\left( Y(0,s) \right)
	\end{equation*}
	
	Under paired evaluation,
	\begin{equation*}
		\operatorname{Var}\!\left( \hat{\Delta}_{\text{pair}} \right)
		=
		\frac{1}{n}\operatorname{Var}\!\left( Y(1,s) - Y(0,s) \right)
	\end{equation*}
	
	Expanding the variance,
	\begin{equation*}
		\begin{aligned}
			\operatorname{Var}\!\left( Y(1,s) - Y(0,s) \right)
			&=
			\operatorname{Var}\!\left( Y(1,s) \right)
			+
			\operatorname{Var}\!\left( Y(0,s) \right) \\
			&\quad
			-
			2\,\operatorname{Cov}\!\left( Y(1,s), Y(0,s) \right)
		\end{aligned}
	\end{equation*}
	
	Dividing by $n$ yields the stated result.
\end{proof}

\subsection{Standard Error and Estimation Details}
\label{app:standard_error}

To estimate finite-sample standard errors empirically, we employ a Monte Carlo subsampling procedure. For each subsample size $n$, subsets of $n$ seeds are repeatedly drawn from the available evaluation runs, and the policy effect is re-estimated under both paired and independent designs. The empirical standard deviation of these estimates provides an estimate of the sampling standard error at that sample size. This procedure underlies the confidence interval estimates reported in the main text.

A paired evaluation based on $r$ total simulator runs corresponds to $r/2$ paired seeds. The variance of the paired sample mean estimator is therefore
\[
\operatorname{Var}_{\text{pair}}
=
\frac{
	\sigma_1^2 + \sigma_0^2 - 2\rho\,\sigma_1\sigma_0
}{
	r/2
}
\]

Under independent evaluation, the two outcomes are generated using independent seeds, implying
\[
\operatorname{Var}_{\text{ind}}
=
\frac{
	\sigma_1^2 + \sigma_0^2
}{
	r/2
}
\]

Taking square roots yields the corresponding standard errors. Expressing the paired standard error relative to the independent one gives
\[
\mathrm{SE}_{\text{pair}}
=
\mathrm{SE}_{\text{ind}}
\sqrt{
	1
	-
	\frac{
		2\rho\,\sigma_1\,\sigma_0
	}{
		\sigma_1^2 + \sigma_0^2
	}
}
\]
which establishes Equation~\ref{eq:se}. Positive seed-level correlation therefore directly reduces estimator uncertainty under paired evaluation. All expectations and variances are taken with respect to the distribution over random seeds, which are assumed independent across pairs.

\subsection{Detection and Directional Reliability}
\label{app:power_disc}

Directional stability is defined as the probability of correctly identifying the sign of the treatment effect. Let $\hat{\Delta}_{n}$ denote the estimated effect computed using a subsample of $2n$ runs, and let $\hat{\Delta}_{\mathrm{full}}$ denote the estimate obtained using all available seeds. We measure directional stability as
\begin{equation*}
	\Pr\!\left(
	\operatorname{sign}\!\left(\hat{\Delta}_{n}\right)
	=
	\operatorname{sign}\!\left(\hat{\Delta}_{\mathrm{full}}\right)
	\right)
\end{equation*}
where the full-sample estimate serves as a high-precision reference for the underlying effect direction.

Directional stability and statistical power are governed by the same underlying quantity, namely the signal to noise ratio $|\Delta|/\mathrm{SE}$. For a two sided hypothesis test at significance level $\alpha$, statistical power can be approximated as

\begin{equation*}
	\text{Power}
	\approx
	1 - \Phi\!\left(
	z_{1-\alpha/2} - \frac{|\Delta|}{\mathrm{SE}}
	\right)
	\label{eq:power}
\end{equation*}

where $\Phi(\cdot)$ denotes the standard normal cumulative distribution function.

\subsection{Effective Sample Size}

This variance reduction admits an equivalent interpretation in terms of effective sample size. Let $r_{\mathrm{eff}}$ denote the number of independently seeded runs required to achieve the same variance as a paired evaluation based on $r$ total runs. An independent evaluation using $r_{\mathrm{eff}}$ runs has variance
\[
\operatorname{Var}_{\text{ind}}
=
\frac{
	\sigma_1^2 + \sigma_0^2
}{
	r_{\mathrm{eff}}/2
}
\]

Equating this expression with $\operatorname{Var}_{\text{pair}}$ and solving for $r_{\mathrm{eff}}$ yields
\[
r_{\mathrm{eff}}
=
r \cdot
\frac{
	\sigma_1^2 + \sigma_0^2
}{
	\sigma_1^2 + \sigma_0^2 - 2\rho\,\sigma_1\,\sigma_0
}
\]
which establishes Equation~\ref{eq:effective_sample}. This expression quantifies the computational gains from pairing in terms of equivalent independent simulator runs.

\section{Appendix - Case Study Details}

\renewcommand{\thetable}{B.\arabic{table}}
\setcounter{table}{0}

\subsection{The TaxAI Framework \citep{mi2024taxai}} 
\label{app:taxai_ubi}

TaxAI is a dynamic Bewley–Aiyagari economy implemented as a partially observable multi-agent reinforcement-learning environment. Households and the government learn behavioural policies over time, rather than solving analytical optimization problems. This makes TaxAI suitable for studying policy interventions whose effects depend on bounded rationality, information frictions, and adaptive decision-making, which directly motivates our UBI extension. Code can be found at \url{https://github.com/DemandredEng/TaxAI}.

Each household chooses consumption, savings, and labour supply while facing idiosyncratic productivity shocks and non-linear taxes. Household income at time $t$ is given by:
\begin{equation*}
	i_t = W_t\, e_t\, h_t + r_{t-1}\, a_t
\end{equation*}
where $W_t$ is the wage rate, $e_t$ labour productivity, $h_t$ working hours, $a_t$ asset and return to savings $r_{t-1}$. Households cannot borrow $(a_t >= 0)$. To reflect realistic information frictions, TaxAI compresses household-level information into two coarse bins; the top 10\% richest and bottom 50\% poorest. Agents observe their own productivity and assets, along with grouped averages of income, wealth, and productivity, while the government observes economy-wide aggregates. This partial-observability structure shapes how learned policies respond to macroeconomic conditions.

Households face income taxes $T(i_t)$, wealth taxes $T^a(a_t)$, and a proportional consumption tax rate $\tau_s$. Their period budget constraint is:
\begin{equation*}
	(1+\tau_s)c_t + a_{t+1}
	=
	i_t - T(i_t) + a_t - T^a(a_t)
\end{equation*}
This constraint defines the mapping between government policy and household behaviour. The specific tax functions follow the non-linear HSV formulation from the original TaxAI model, and we refer the reader to the original work for full details.

The government learns a policy over non-linear income-tax parameters, wealth-tax parameters, consumption-tax rates, and spending. Prices are determined each period by a representative firm with Cobb–Douglas production and a financial intermediary that converts savings into capital. These components are unchanged and provide the macroeconomic backbone for our extension.

This baseline defines behavioural frictions and macro feedbacks that our UBI mechanism must operate within. In the next section, we introduce an endogenous UBI policy that is jointly learned with the tax parameters, enabling a controlled comparison between economies with and without transfers.

\subsection{UBI Mechanism}

This section introduces a universal basic income (UBI) mechanism into the TaxAI environment described previously. The extension is deliberately minimal; it preserves the original market structure, household decision problem, and learning setup, while adding a single redistributive instrument that operates through lump-sum transfers. This design enables clean comparisons between economies with and without transfers, holding all other institutional features fixed.

We augment the government’s action space with a scalar policy parameter $\phi_t \in [0, \bar{\phi}]$, representing the fraction of contemporaneous gross domestic product allocated to UBI. The UBI budget at time $t$ is given by
\begin{equation*}
	\mathcal{U}_t = \phi_t Y_t
\end{equation*}
where $Y_t$ denotes aggregate output produced by the representative firm. The government selects $\phi_t$ jointly with the existing tax and spending parameters as part of its reinforcement-learning policy. The upper bound $\bar{\phi}$ is fixed exogenously to ensure fiscal feasibility and stabilize learning.

UBI is distributed uniformly across households. Let $N$ denote the number of households. The per-household transfer implied by policy $\phi_t$ is
\begin{equation*}
	u_t = \frac{\mathcal{U}_t}{N}
\end{equation*}
To avoid contemporaneous feedback loops and to reflect realistic administrative delays, transfers are paid with a one-period lag. A UBI decision made at time $t$ is disbursed at time $t+1$.

Upon receipt at time $t+1$, the UBI transfer is added to household assets after income and asset taxes have been applied, but before households choose consumption and savings. The household budget constraint therefore becomes
\begin{equation*}
	(1+\tau_s)c_t + a_{t+1}
	=
	i_t - T(i_t) + a_t - T^a(a_t) + u_{t-1}
\end{equation*}
where $u_{t-1}$ denotes the per-household UBI payment determined in the previous period.

By construction, the transfer is lump-sum and does not depend on individual income, assets, or labour supply. It therefore does not alter within-period marginal incentives, though it affects future behaviour through wealth accumulation. The transfer augments household wealth directly and is subject to standard taxation only from period $t+2$ onward, once it becomes part of the endogenous asset state.

The government’s inter-temporal budget constraint is modified to account for UBI disbursements. Let $B_t$ denote government debt. Total UBI paid at time $t$ is equal to $\mathcal{U}_{t-1}$, and enters the budget constraint symmetrically with other expenditures:
\begin{equation*}
	(1+r_{t-1})B_t + G_t + \mathcal{U}_{t-1}
	=
	B_{t+1} + T_t
\end{equation*}
where $T_t$ denotes total tax revenue from income, wealth, and consumption taxes as defined above. No additional financing instrument is introduced; UBI is funded implicitly through taxation and government debt. No additional financing instrument is introduced; UBI is financed through the government’s existing tax and debt instruments.

The UBI parameter $\phi_t$ is selected under the same partial-observability constraints as other fiscal instruments. The government observes grouped macroeconomic aggregates but does not condition transfers on individual-level states. Households do not observe $\phi_t$ directly; instead, they infer policy effects through realized transfers and aggregate outcomes. This preserves the informational structure of TaxAI.

In the baseline TaxAI environment, episodes terminate whenever aggregate household consumption and government expenditure exceed contemporaneous output, enforcing a hard feasibility constraint. Under universal transfers, this rule induces frequent premature termination and prevents meaningful policy evaluation, as exploratory policies involving UBI are truncated before their longer-run effects can be observed.

To maintain incentives towards feasibility while allowing learning to proceed, we replace hard termination with a strongly binding penalty applied to the government’s reward whenever aggregate demand exceeds output. The penalty is proportional to normalized excess demand and is smoothly bounded to ensure numerical stability. Formally, define the excess-demand penalty $\kappa_t$ and the corresponding government reward $r^{\text{gov}}_t$ adjustment as:

\begin{equation*}
	\kappa_t
	= \max\!\left(0,\; \frac{C_t + G_t - Y_t}{Y_t}\right)
\end{equation*}

\begin{equation*}
	r^{\text{gov}}_t
	\leftarrow r^{\text{gov}}_t - \lambda_\kappa \tanh(\kappa_t)
\end{equation*}

where $\lambda_\kappa > 0$ is a fixed penalty weight that governs the severity of the feasibility violation in the government’s reward. Empirically, the excess-demand penalty becomes inactive after early training for most runs, indicating that the learned policy internalizes the resource constraint rather than relying on infeasible states.

This extension treats UBI as a universal, unconditional transfer that redistributes wealth and smooths consumption without directly distorting labour or savings decisions. The government is allowed to use UBI as an additional policy lever alongside existing fiscal instruments, but the analysis focuses on how the availability of this instrument alters behaviour and aggregate outcomes, rather than on the composition of the learned policy rules themselves. Because the only difference between experimental economies is the presence of the UBI instrument, and because identical random seeds are used across runs, the framework permits paired seed evaluation of distributional, behavioural, and macroeconomic outcomes.

\subsection{Agents and Learning Setup}

The learning framework closely follows the original TaxAI design; the only modification is the inclusion of the UBI policy parameter in the government’s action space. All agents are trained using the Bi-level Mean-Field Actor–Critic (BMFAC) algorithm, which we employ consistently across all experiments to ensure comparability.

The economy is modelled as a partially observable multi-agent Markov game with two types of strategic agents: a single government agent and a large population of heterogeneous household agents. The government learns fiscal policy parameters, while households learn individual consumption, savings, and labour-supply strategies. Due to the large number of households, direct centralized training over the full joint state space is infeasible. Following TaxAI, we therefore adopt a mean-field approximation in which households interact with aggregate group-level statistics rather than the full population state\citep{yang2018meanfield}.

\subsection{Objectives and Reward Functions}

The reward structure is identical to that of TaxAI. Household agents maximize discounted lifetime utility, given by a CRRA utility function over consumption minus disutility from labour supply. No additional reward shaping or penalties are introduced, and the UBI transfer affects household behaviour only through the budget constraint.

The government’s objective is task-dependent. We consider three of the four reward specifications defined in the TaxAI framework: (i) maximizing GDP growth, (ii) minimizing inequality, and (iii) a multi-objective combination of maximising GDP and minimising inequality. For each reward specification, we conduct paired experiments with and without UBI, holding the learning algorithm and training protocol fixed. This design allows us to assess how the presence of UBI alters outcomes under different policy objectives, rather than conditioning results on a single normative criterion.

\subsection{Action Spaces and Constraints}

The household action space is unchanged from TaxAI. Each household selects a savings ratio and labour supply level, which jointly determine consumption and next-period assets subject to the household budget constraint.

The government’s action space includes all fiscal instruments defined in TaxAI; non-linear income-tax parameters, non-linear wealth-tax parameters, and the government spending-to-GDP ratio, augmented by a single additional scalar action corresponding to the UBI share of GDP. Thus, the dimensionality of the government action space increases by one relative to the baseline environment. All feasibility constraints, action bounds, and proportional-action formulations remain unchanged, ensuring that fiscal policies respect the same budgetary structure as in the original model.

\subsection{Training Procedure and Stability}

Training follows the same protocol as TaxAI. Government and household agents learn simultaneously using BMFAC, with decentralized execution and critics conditioned on mean-field aggregate information. Exploration, action clipping, reward normalization, and other stabilization mechanisms are inherited directly from the baseline framework. No new regularization terms or constraint-handling mechanisms are introduced specifically for UBI. 

\begin{table*}[h]
	\centering
	\caption{Estimated policy effects at a fixed evaluation budget of 44 runs.}
	\label{tab:appdxgdp_gini_results}
	\begin{tabular}{lcccccc}
		\toprule
		Metric
		& $\hat{\Delta}$
		& \multicolumn{2}{c}{Paired 95\% CI}
		& \multicolumn{2}{c}{Independent 95\% CI} \\
		\cmidrule(lr){3-4} \cmidrule(lr){5-6}
		& & Lower & Upper & Lower & Upper \\
		\midrule
		
		\multicolumn{6}{l}{\textbf{GDP Objective}} \\
		\addlinespace[0.3em]
		
		Wealth Gini
		& $-0.0238$
		& $-0.0536$
		& $0.0060$
		& $-0.0749$
		& $0.0273$ \\
		
		Income Gini
		& $0.0006$
		& $-0.0307$
		& $0.0319$
		& $-0.0577$
		& $0.0589$ \\
		
		Per-capita GDP ($\times 10^{5}$)
		& $1.4761$
		& $-1.7099$
		& $4.6620$
		& $-9.1778$
		& $12.1299$ \\
		
		Tax--GDP ratio
		& $0.0426$
		& $-0.0302$
		& $0.1155$
		& $-0.0902$
		& $0.1755$ \\
		
		\addlinespace[0.6em]
		\multicolumn{6}{l}{\textbf{GINI Objective}} \\
		\addlinespace[0.3em]
		
		Wealth Gini
		& $-0.0339$
		& $-0.0486$
		& $-0.0192$
		& $-0.0805$
		& $0.0127$ \\
		
		Income Gini
		& $0.0024$
		& $-0.0210$
		& $0.0259$
		& $-0.0516$
		& $0.0565$ \\
		
		Per-capita GDP ($\times 10^{5}$)
		& $2.1004$
		& $-1.1896$
		& $5.3904$
		& $-12.0359$
		& $16.2367$ \\
		
		Tax--GDP ratio
		& $-0.0069$
		& $-0.0756$
		& $0.0617$
		& $-0.1206$
		& $0.1067$ \\
		
		\bottomrule
	\end{tabular}
\end{table*}

\begin{table*}[h]
	\centering
	\caption{Within-seed and cross-seed policy distances across tasks and metrics}
	\label{tab:appdx_policy_distance}
	\begin{tabular}{lccc}
		\toprule
		Metric
		& Within-seed median
		& Cross-seed distance
		& Median ratio \\
		\midrule
		
		\multicolumn{4}{l}{\textbf{GDP Objective}} \\
		\addlinespace[0.3em]
		
		Per-capita GDP ($\times 10^{5}$)
		& $2.17$
		& $14.26$
		& $0.152$ \\
		
		Wealth Gini
		& $0.0388$
		& $0.0722$
		& $0.537$ \\
		
		\addlinespace[0.6em]
		\multicolumn{4}{l}{\textbf{GDP--Gini Objective}} \\
		\addlinespace[0.3em]
		
		Per-capita GDP ($\times 10^{5}$)
		& $1.34$
		& $21.95$
		& $0.061$ \\
		
		Wealth Gini
		& $0.0405$
		& $0.0771$
		& $0.525$ \\
		
		\addlinespace[0.6em]
		\multicolumn{4}{l}{\textbf{Gini Objective}} \\
		\addlinespace[0.3em]
		
		Per-capita GDP ($\times 10^{5}$)
		& $1.64$
		& $18.85$
		& $0.087$ \\
		
		Wealth Gini
		& $0.0412$
		& $0.0611$
		& $0.674$ \\
		
		\bottomrule
	\end{tabular}
	\begin{minipage}{\linewidth}
		\footnotesize
		\textit{Notes:} Within-seed medians report the distance between policies learned with and without UBI under identical random seeds. 
		Cross-seed distances measure variation across different random initialisations. 
		The median ratio is defined as the ratio of within-seed median distance to cross-seed distance. 
		Ratios substantially below one indicate that UBI induces local policy adjustments within a shared fiscal regime rather than shifts across regimes. 
		Cross-seed distance is measured as the mean absolute deviation of outcomes from their task-level mean, pooling both UBI and no-UBI runs.
	\end{minipage}
\end{table*}

\end{document}